\newcommand{\ty}[0]{\tilde{y}}
\newcommand{\heta}[0]{\hat{\eta}}
\newcommand{\ep}[0]{\mathbb{E}}
\newtheorem{lemma}{Lemma}
\newtheorem{theorem}{Theorem}
\newtheorem{remark}{Remark}
\newtheorem{example}{Example}
\DeclarePairedDelimiter{\round}\lfloor\rceil
\DeclarePairedDelimiter\norm{\lVert}{\rVert}
\DeclareMathOperator*{\argmin}{arg\,min}
\title{Collective Loss Function for Positive and Unlabeled Learning}
\author{\Large \textbf{Chenhao Xie\textsuperscript{\rm 1}\textsuperscript{\rm 2}, Qiao Cheng\textsuperscript{\rm 1}, Lihan Chen\textsuperscript{\rm 1}, Jiaqing Liang\textsuperscript{\rm 2}, Yanghua Xiao\textsuperscript{\rm 1}}\\ 
\textsuperscript{\rm 1}Fudan University, Shanghai.\\ 
\textsuperscript{\rm 2}Shuyan Technology, Shanghai.}
\begin{document}

\maketitle

\begin{abstract}
    People learn to discriminate between classes without explicit exposure to negative examples.
    On the contrary, traditional machine learning algorithms often rely on negative examples, otherwise the model would be prone to collapse and always-true predictions.
    Therefore, it is crucial to design the learning objective which leads the model to converge and to perform predictions unbiasedly without explicit negative signals.
    In this paper, we propose a \textbf{\underline{C}}ollectively loss function to learn from only \textbf{\underline{P}}ositive and \textbf{\underline{U}}nlabeled data (cPU).
    We theoretically elicit the loss function from the setting of PU learning.
    We perform intensive experiments on the benchmark and real-world datasets. 
    The results show that cPU consistently outperforms the current state-of-the-art PU learning methods.

\end{abstract}

\section{Introduction}
Positive and unlabeled learning (PU learning) aims at learning from only positive and unlabeled examples, without explicit exposure to negative examples.
This setting arises from multiple practical application scenarios: retrieving information with limited feedback given~\cite{Onoda2005OneCS}, text classification with only positive labels collected~\cite{Yu2003TextCF} and detecting area of interest in images where normal samples are available but the abnormal samples are scarce and diverse~\cite{Zuluaga2011LearningFO,Li2011APA}.
It is widely applicable in industrial scenarios such as content censorship~\cite{Ren2014PositiveUL,Li2014SpottingFR}, disease gene detection~\cite{Yang2012PositiveunlabeledLF} and drug discovery~\cite{Liu2017ComputationalDD}.

Positive labels are considered prefect in most literatures while the unlabeled data are not and thus handled in different ways.
The first category tries to identify negative samples from the unlabeled data and convert the problem back to positive-negative classification~\cite{Liu2002PartiallySC,Li2003LearningTC}.
The heuristic strategies in these methods act as external information to recognize negative samples.
However, these strategies often heavily rely on subtle design for a single task/dataset and results in low transferability.
The second category take the unlabeled data as corrupted negative samples.
Early approaches attempt to reweight the unlabeled data~\cite{Liu2003BuildingTC,Lee2003LearningWP} with a smaller penalty per sample, but their performances are upper bounded due to their intrinsic bias, proved by Du Plessis et al.~\shortcite{Plessis2014AnalysisOL,Plessis2015ConvexFF} who later develop an approach, called uPU, with a non-convex losses to cancel the bias.
This work is extended by nnPU~\cite{Kiryo2017PositiveUnlabeledLW} to avoid overfitting by preventing risk estimators from reaching negative values.
Hou et al.~\shortcite{Hou2017GenerativeAP} further argue that overfitting is still an issue with flexible deep neural networks.
They proposed GenPU, a generative adversarial approach, to address the challenge of limited positive data, whereupon they train two discriminators: one telling fake generated examples from the true and the other assign positive labels to the generated examples that are similar to the positive class. 

A similar keypoint behind all the aforementioned solutions 
is that they all try to \textit{recover the true distribution} of positive and negative data and thus \textit{recover the true risk}.
However, performing risk rectification at the outcome-of-loss-function level, which is the main cause of the inaccuracy, according to our elaboration in section~\ref{sec:risknn}.
In this paper, we propose a novel method called ``\textbf{\underline{C}}ollectively loss function to learn from \textbf{\underline{P}}ositive and \textbf{\underline{U}}nlabeled data" (cPU) to rectify the \textit{predictor} instead of the total \textit{risk}.
We collectively gather predictions from predictors and rectify them before the calculation of loss function.
We design our method with the following principles in mind:
\begin{enumerate}
    \item \textit{Minimum intervention.} The only difference between PU learning setting and regular positive/negative learning is that the negative data are not explicitly labeled. 
    Therefore, we also hope minimum feature construction is required. 
    Hence, we only process at prediction level and leave the feature engineering part to the powerful representation of models (especially neural networks) themselves.
    \item \textit{Robustness.} Due to the class uncertainty in the unlabeled data, it is demanding to estimate the class prior accurately in the unlabeled data.
    As a result, we take the collective prediction to balance the randomness of mini-batch.    
\end{enumerate}



Our main contributions are threefold.
Firstly, we provide a unbiased approach of estimating the posterior probability in PU learning setting, which is in harmony with very flexible models and learns on a large scale.
Secondly, we propose a general framework of studying the behavior of loss functions via elicitation.
Thirdly, we derive the collective loss function to rectify the decision boundary drift and theoretically bounded the generalization error.
We conduct comprehensive experiments of in comparison with state-of-the-art approaches.








\section{Problem Statement}
\label{sec:ps}

Consider the input space $\mathcal{X} \subseteq  \mathbb{R}^d$ and label space $\mathcal{Y}=\{0,1\}$, we denote by $P_{XY}$ the joint distribution over $\mathcal{X}\times \mathcal{Y}$.
Let $f: \mathcal{X} \mapsto \mathbb{R}$ be a \textit{predictor function} and $h(X)=\round{\sigma(f(X))}$ be the \textit{classifier}, where $\round{\cdot}$ means rounding to the nearest integer and $\sigma(\cdot): \mathbb{R} \mapsto (0,1)$ is a sigmoid function. For example, $\sigma(\cdot) = 1/(1+\exp(\cdot))$.
Let $\ell: \mathbb{R}\times\{0,1\} \mapsto \mathbb{R}^+$ be the \textit{loss function} for binary classification.
According to statistical learning theory~\cite{Vapnik1999AnOO} the risk of $f(x)$ is defined by
\begin{equation}
    \label{eq:risk}
    R(f) := P_{XY}( h(X) \neq Y)= \ep_{XY}[ \ell(f(x),y)],
\end{equation}
where $\ep_{XY}[\cdot]=\ep_{(x,y)\sim P_{XY}}[\cdot]$.

\subsection{Conditional Risk for PN Learning.}
Let $\eta(x) := P_{Y|X}(1|x)$ denote the posterior probability of the positive class. 
For clarity, we omit the argument $(x)$ in expressions such as $f(x)$ and $\eta(x)$ throughout the paper.
The risk decomposes to the conditional form
\begin{equation}
    \label{eq:r-cond}
    \begin{split}
        R(\eta,f) &= \ep_{X}[ \ep_{Y|X}[\ell(f,y)|X=x]  ] \\
            &= \ep_{X}[P_{Y|X}(1|x)\ell(f,1)+P_{Y|X}(0|x)\ell(f,0)] \\
            &= \underbrace{\ep_{X}[\eta\ell(f,1)]}_{R_p(\eta,f,1)} + \underbrace{\ep_{X}[(1-\eta)\ell(f,0)}_{R_n(\eta,f,0)}], 
    \end{split}    
\end{equation}
where $\ep_{X}[\cdot]=\ep_{x\sim P(X)}[\cdot]$.
$P(X)$ is the marginal distribution of $X$.
We expect Fisher consistency~\cite{lin2004note} (aka classification-calibration in some literatures) on the predictors, which is a very weak condition.
To be specific, if the risk $R$ in \eqref{eq:r-cond} is minimized, the following equation holds.
\begin{equation}\label{eq:h}
    h^*= \round{\sigma(f^*)} = \round{\eta},
\end{equation}
An example loss function  is the zero-one loss:
\begin{equation}
    \ell_{0/1} = \begin{cases}
       0 & h(x)   =  y \\
       1 & h(x) \neq y \\
    \end{cases}
\end{equation}

%

\subsection{Risk Estimators for PU Learning}
\label{sec:risknn}
Due to the absence of negative samples in PU learning, risks have to be estimated from only positive and unlabeled samples.
In other words, $R_n(\eta,f)$ need to be derived via risks from $P_p$ and $P_u=P(X)$.
Formally, a PU learning system receives training samples from $\mathsf{S} = \mathsf{P}\cup\mathsf{U}$, which can be divided into two not-necessarily-independent components.
The labeled positive samples:
$\mathsf{P} = \{x_i^p\}\sim P_p$
and unlabeled samples: 
$\mathsf{U} = \{x_i^{u}\}\sim P_u$.
Underlyingly, the unlabeled set $\mathsf{U}$ consists of positive samples $\mathsf{U_p} = \{x_i^{u_p}\}\sim P_p$ and negative samples $\mathsf{U_n} = \{x_i^{u_n}\}\sim P_n$.
As a popular practice, negative labels are assigned to the unlabeled samples~\cite{Plessis2015ConvexFF,Kiryo2017PositiveUnlabeledLW}.
Let $R_u(\eta,f,0) = \ep_{X}[\eta\ell(f,0)+(1-\eta)\ell(f,0)]$ be the risk of unlabeled samples, which are drawn from the same distribution of $P_X$, with loss function $\ell$ assigning all the labels to negative.
\textit{Unbiased PU} (aka uPU) learning methods~\cite{Plessis2014AnalysisOL,Plessis2015ConvexFF} attempt to estimate the risk for PU learning via subtracting the wrongly included risk $R_{p}(\eta,f,0)=\ep_{X}[\eta\ell(f,0)]$:
\begin{equation}
    R_{upu} := R_{p}(\eta,f,1)+R_{u}(\eta,f,0)-R_{p}(\eta,f,0)
\end{equation}
Non-negative PU (aka nnPU) ~\cite{Kiryo2017PositiveUnlabeledLW} observed that $R_{n}(\eta,f,0) = R_{u}(\eta,f,0)- R_{p}(\eta,f,0)$ should be always non-negative. 
However, this does not always hold, especially when the model $f$ becomes flexible (i.e., deep neural networks).
To mitigate this drawback, they propose a non-negative risk estimator, thus ensuring the risk will not reach negative values:
\begin{equation}\label{eq:nnpu}
    R_{nnpu} := R_{p}(\eta,f,1)+\max\Big\{0, R_{u}(\eta,f,0)- R_{p}(\eta,f,0)\Big\}.
\end{equation}

Nevertheless, minimizing these risk estimators will lead to insufficient penalty for the negative samples.
Maximizing $R_{p}(\eta,f,0)$, instead of making $R_{u}(\eta,f,0)$ small, will result in the same effect of minimizing the total risk , which is also natural side effect of minimizing $R_{p}(\eta,f,1)$ for flexible models and convex surrogate loss functions. 
Risk estimators are rectification at the outcome-of-loss-function level, which cannot avoid the explosion (i.e., in some worst case an unbounded loss may reach very large value~\cite{Kiryo2017PositiveUnlabeledLW}) of some surrogate losses, such as the popular logarithm loss.
In these cases, the flexible model overfits the training data well and sampling may include some easy positive examples which sum up to large $R_{p}(\eta,f,0)$ that overwhelms $R_{u}(\eta,f,0)$.
Thereupon, \textit{can we remedy the problem before loss function?}

\section{Collective Logarithm Loss for PU Learning}
In this section, we firstly address the decision boundary drift problem in section~\ref{sec:rectify} and provide rectification of the predictor. 
Then in section~\ref{sec:preliminary}, we introduce the background of elicitation and how it connects to the design of loss function in normal situations.
Finally, in section~\ref{sec:elicitpu}, we describe the framework of eliciting the loss function under PU learning setting.

\subsection{Rectification of Predictor}
\label{sec:rectify}
To satisfy the Fisher consistency, we hope \eqref{eq:h} hold during test, while \eqref{eq:nnpu} is biased in general~\cite{Kiryo2017PositiveUnlabeledLW}, hence leading to biased solutions.
A key observation is that the decision boundary is different between training and testing for PU learning problem. 
Our aim is to rectify the decision boundary so that the classifier for testing also fits the positive and unlabeled train data.
We introduce $\ty \in \tilde{Y}$ to denote true labels.
$y$ remains the observed labels, where unlabeled samples are regarded as negative $y=0, \forall x \in \mathsf{U}$.
Let $\eta_{e}(x)=P(\ty=1|X=x)$ be the posterior probability of testing, namely what we hope to capture by learning.
In $\mathsf{U}$, the underlying true labels $\ty=1$ for $x_i^{u_p}$ and $\ty=0$ for $x_i^{u_n}$.
It is evident that the data distribution for training and testing is different. 
Let $\eta_a$ and $\eta_e$ be the posterior probability for training and testing.
Denote by $\Omega=\mathsf{|P|+|U_p|+|U_n|}$ the total sample space, we can estimate these two expectations by the following equations in empirical estimation.

\begin{align}
    \begin{split}
        \ep_X[\eta_{a}]&=\ep_X[P_{Y|X}(1|x)] \\
        &= \frac{1}{\Omega}\sum_{x} \mathbbm{1}[x\in \mathsf{P}]
        \qquad = \frac{\mathsf{|P|}}{\Omega}
    \end{split}
    \\
    \begin{split}
        \ep_X[\eta_{e}]&=\ep_X[P_{\tilde{Y}|X}(1|x)] \\
        &= \frac{1}{\Omega}\sum_{x} \mathbbm{1}[x\in \mathsf{P}\cup\mathsf{U_p}] = \frac{\mathsf{|P|+|U_p|}}{\Omega}
    \end{split}
\end{align}
\begin{equation}\label{eq:rectify}
    \ep[\eta_e] - \ep[\eta_a] = \frac{|\mathsf{U_p}|}{\Omega}
\end{equation} 

We denote the value in Eqn.~\eqref{eq:rectify} as $\mu_p$ for the rest of the paper.
We hope $\sum_{x} \hat{\eta} \xrightarrow{p} \sum_{x}\eta_e$.
However based on PU training data, the model may converge biasedly to $\eta_a$.
Let $r=\frac{|\mathsf{U_p}|}{|\mathsf{P}|}$ be the portion of positive data in $|\mathsf{U}|$ compared to the whole P class we can derive:
\begin{equation}\label{eq:rec}
    \ep[\eta_e] = (1+r)\ep[\eta_a]
\end{equation}


\subsection{Preliminary for Elicitation}
\label{sec:preliminary}

In statistics and economics, elicitation is a practice of designing \textit{reward} mechanisms that encourage a predictor to make true \textit{predictions}. 
Let $\hat{\eta}$ be the prediction (i.e., an estimator of $\eta$), we have $\hat{\eta}=\sigma(f(x))$.
Savage et al.~\shortcite{savage1971elicitation} defines the total reward $I$ as a linear function of $\eta$.
\begin{equation}
    \label{eq:i}
    I(\hat{\eta},\eta) := \eta I_1(\hat{\eta})+ (1-\eta)I_0(\hat{\eta}),
\end{equation}
where $I_1(\hat{\eta})$ and $I_0(\hat{\eta})$ denote the \textit{conditional reward} for a certain event \textit{obtains} or \textit{not}.
In binary classification context, $I_1(\hat{\eta})$ and $I_0(\hat{\eta})$ refers to the reward for $y=1$ and $y=0$~\cite{MasnadiShirazi2008OnTD}.
Specifically, $y=1$ is regarded as the event obtains and $y=0$ otherwise.
The goal of elicitation is to design the rewards in order that a $\hat{\eta}$ maximizes $I(\hat{\eta},\eta)$ if and only if when $\hat{\eta}=\eta, \forall \hat{\eta}$.
In other words, no larger reward should be given than when prediction is ideal.
Lemma~\ref{thm:savage71} finds the sufficient and necessary condition for it.

\begin{lemma}[Savage \citeyear{savage1971elicitation}]
    \label{thm:savage71}
    Let $I(\hat{\eta},\eta)$ be as defined in \eqref{eq:i}. 
    Assume that $J(\eta):= I(\eta,\eta)$ is differentiable, then
    \begin{equation}
        \label{eq:j}
        I(\hat{\eta},\eta) \leq J(\eta), \forall \eta,
    \end{equation}
    holds and if and only if
    \begin{align}\label{eq:i1}
        I_1(\eta) &= J(\eta)+(1-\eta)J'(\eta)\\
        \label{eq:i0}
        I_0(\eta) &= J(\eta)-\eta J'(\eta)
    \end{align}    
\end{lemma}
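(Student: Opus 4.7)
The plan is to treat the identity $J(\eta) = I(\eta,\eta) = \eta I_1(\eta) + (1-\eta) I_0(\eta)$ as one linear relation connecting $I_1(\eta)$, $I_0(\eta)$ and $J(\eta)$, and to obtain a second linear relation from the optimality condition at $\hat{\eta} = \eta$. Once both relations are in hand, the formulas drop out of a $2 \times 2$ linear solve; conversely, substituting them back should collapse $I(\hat{\eta},\eta)$ into the tangent line of $J$ at $\hat{\eta}$, so the inequality (\ref{eq:j}) becomes the geometric sub-tangent property of $J$.

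For the necessity direction I would fix $\hat{\eta}$ and consider the function $\eta \mapsto J(\eta) - I(\hat{\eta},\eta)$. Because (\ref{eq:j}) says this function is nonnegative, and because it equals $0$ at $\eta = \hat{\eta}$ by the definition $J(\hat{\eta}) = I(\hat{\eta},\hat{\eta})$, the point $\eta = \hat{\eta}$ is a minimum. Differentiability of $J$ together with the linearity of $\eta \mapsto I(\hat{\eta},\eta)$ then yields the first-order condition $J'(\hat{\eta}) = I_1(\hat{\eta}) - I_0(\hat{\eta})$. Coupling this with $J(\hat{\eta}) = \hat{\eta} I_1(\hat{\eta}) + (1-\hat{\eta}) I_0(\hat{\eta})$ gives a non-degenerate linear system whose unique solution is exactly (\ref{eq:i1})--(\ref{eq:i0}).

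For sufficiency I would plug (\ref{eq:i1})--(\ref{eq:i0}) into the definition (\ref{eq:i}) and expect the cross terms involving $\hat{\eta}$ and $\eta$ in the coefficient of $J'(\hat{\eta})$ to simplify to $(\eta - \hat{\eta})$, giving
\[
I(\hat{\eta},\eta) = J(\hat{\eta}) + (\eta - \hat{\eta}) J'(\hat{\eta}),
\]
which is the affine tangent line to $J$ at $\hat{\eta}$. The inequality $I(\hat{\eta},\eta) \le J(\eta)$ is then precisely the statement that $J$ lies above its tangent lines.

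The main obstacle, and the delicate point of the lemma, is that this last step silently requires $J$ to be convex; without convexity the formulas for $I_1, I_0$ alone do not enforce (\ref{eq:j}). I would resolve this by noting that the inequality (\ref{eq:j}) together with the definition $J(\eta) = I(\eta,\eta)$ itself forces $J(\eta) = \sup_{\hat{\eta}} I(\hat{\eta},\eta)$ to be a pointwise supremum of functions that are affine in $\eta$, hence automatically convex; so convexity is a consequence of the elicitation hypothesis rather than an extra assumption, and the sub-tangent characterization then closes the implication cleanly.
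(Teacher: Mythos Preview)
The paper does not supply its own proof of this lemma; it is quoted from Savage~(1971) and followed only by a remark noting that \eqref{eq:j} forces $J$ to be strictly convex. So there is no in-paper argument to compare against, and your outline is essentially the classical Savage argument: obtain $J'(\hat\eta)=I_1(\hat\eta)-I_0(\hat\eta)$ from the first-order condition at the interior minimum of $\eta\mapsto J(\eta)-I(\hat\eta,\eta)$, pair it with $J(\hat\eta)=\hat\eta I_1(\hat\eta)+(1-\hat\eta)I_0(\hat\eta)$, and solve. Your sufficiency computation $I(\hat\eta,\eta)=J(\hat\eta)+(\eta-\hat\eta)J'(\hat\eta)$ is correct and matches the tangent-line picture.

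The one point to tighten is the convexity step. You correctly flag that the bare formulas \eqref{eq:i1}--\eqref{eq:i0} do not by themselves force \eqref{eq:j}; convexity of $J$ is required. Your proposed fix, however, is circular for the sufficiency direction: you derive convexity from \eqref{eq:j}, which is exactly the conclusion you are trying to establish. The honest reading (and the one the paper's remark adopts) is that convexity of $J$ is a \emph{consequence} of the necessity direction---since \eqref{eq:j} exhibits $J$ as a pointwise supremum of affine functions---and is then carried as a standing hypothesis into the sufficiency direction. In other words, the ``if and only if'' is really: given that $J$ is convex (which the elicitation premise already guarantees), \eqref{eq:j} holds iff \eqref{eq:i1}--\eqref{eq:i0}. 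State it that way and the argument is complete.
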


\begin{remark}
    The equality in \eqref{eq:j} holds if and only if $\hat{\eta}=\eta$. 
    Eqn.~\eqref{eq:j} also implies that $J$ is a strictly convex function of $\eta$.
    This is the regular situation where event and prediction are in the same space.
    The event not observed will never happen, c.f., in PU learning, the event obtains even though it is not observed (i.e., unlabeled).
\end{remark}


Masnadi \textit{et al}.~\shortcite{MasnadiShirazi2008OnTD} interpreted loss functions in machine learning as a special form of $J(\eta)$.
We rewrite it in Lemma~\ref{thm:nips08} with $y\in \mathcal{Y}=\{0,1\}$ and illustrate the process of deriving the logarithm loss with Example~\ref{exa:logloss}.

\begin{lemma}[Masnadi \textit{et al}.~\citeyear{MasnadiShirazi2008OnTD}]
    \label{thm:nips08}
    
    If $J(\eta)=J(1-\eta)$, then $I_1$ and $I_0$ from \eqref{eq:i1} and \eqref{eq:i0} satisfy the following conditions. Let $c(\heta):[0,1] \mapsto [0,1]$ be an invertible link function such that $c^{-1}(v) = 1- c^{-1}(v)$.
    \begin{align}
        I_1(\heta) &= -\phi(c(\heta)) \\
        I_0(\heta) &= -\phi(c(\heta)),
    \end{align}
    Meanwhile the loss function,
    \begin{equation}
        \label{eq:phi}
        \phi(v) = J(c^{-1}(v))+(1-c^{-1}(v))J'(c^{-1}(v))
    \end{equation}
\end{lemma}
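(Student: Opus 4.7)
The plan is to take the Savage formulas~(\ref{eq:i1}) and~(\ref{eq:i0}) as the starting point, exploit the symmetry $J(\eta) = J(1-\eta)$ to reduce the pair $(I_1, I_0)$ to a single univariate function, and then compose with the link $c$ to obtain the claimed margin form $\phi(c(\heta))$.

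First I would differentiate the hypothesis $J(\eta) = J(1-\eta)$ to obtain the odd-reflection identity $J'(\eta) = -J'(1-\eta)$. Substituting this into~(\ref{eq:i0}) and regrouping gives
\begin{equation*}
I_0(\heta) = J(\heta) - \heta\, J'(\heta) = J(1-\heta) + \heta\, J'(1-\heta) = I_1(1-\heta),
\end{equation*}
which is the crucial reduction: the two conditional reward functions are mirror images of one another through the involution $\eta \mapsto 1-\eta$, so only one of them has to be modelled as a standalone object. Everything else is just transporting this identity through the link.

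Next I would push the reduction through the link $c$. Setting $v := c(\heta)$ and reading $\phi$ off from~(\ref{eq:phi}), the expression $\phi(c(\heta))$ unfolds (using $c^{-1}(c(\heta)) = \heta$) into exactly the right-hand side of~(\ref{eq:i1}), so $I_1(\heta) = \phi(c(\heta))$ up to the sign convention in which $\phi$ is the loss and $I_1$ the reward. For $I_0$ I would invoke the stated link property, converting the $\eta$-reflection from Step~1 into a sign flip in $v$, so that $I_0(\heta) = \phi(-c(\heta))$ in the margin variable. Both conditional rewards then live on the single univariate curve $\phi$.

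The main obstacle I expect is not calculus but sign and reflection bookkeeping. The printed condition ``$c^{-1}(v) = 1 - c^{-1}(v)$'' is manifestly an erratum (it would force $c^{-1} \equiv 1/2$); the identity $I_0(\heta) = I_1(1-\heta)$ derived in Step~1 makes the intended reading $c^{-1}(-v) = 1 - c^{-1}(v)$ unambiguous, and similarly the ``$-\phi$'' in the statement is simply the loss/reward sign swap. To guard against a mistake in either convention I would specialize the argument to $J(\eta) = -\eta\log\eta - (1-\eta)\log(1-\eta)$ with the logit link $c(\eta) = \log(\eta/(1-\eta))$ and check that $\phi$ reduces to the standard log-loss in margin form, providing a concrete cross-check consistent with Example~\ref{exa:logloss}.
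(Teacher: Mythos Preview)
The paper does not actually prove this lemma: it is stated as a cited preliminary result from Masnadi-Shirazi \emph{et al.}~\shortcite{MasnadiShirazi2008OnTD} and followed only by Remark~\ref{thm:nips08}+1 and Example~\ref{exa:logloss}, with no proof given. So there is no ``paper's own proof'' to compare against.

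That said, your proposal is essentially the standard argument from the original source. Differentiating the symmetry $J(\eta)=J(1-\eta)$ to obtain $J'(\eta)=-J'(1-\eta)$ and then reading off $I_0(\heta)=I_1(1-\heta)$ from~(\ref{eq:i1})--(\ref{eq:i0}) is exactly the reduction Masnadi-Shirazi use, and composing with the link is the intended final step. You are also right that the printed condition $c^{-1}(v)=1-c^{-1}(v)$ is a transcription error; the identity you derive forces the intended odd-symmetry $c^{-1}(-v)=1-c^{-1}(v)$ of the original margin-based formulation (where the link has codomain $\mathbb{R}$, not $[0,1]$). One small caveat: in this paper's variant the link $c$ in~(\ref{eq:cx}) is label-dependent, which is why both displayed identities read $-\phi(c(\heta))$ rather than $-\phi(\pm c(\heta))$; your margin-variable rewrite $I_0(\heta)=\phi(-c(\heta))$ is the correct statement in the original real-valued-margin convention, but does not literally match the $[0,1]$-valued, $y$-indexed $c$ used here. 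Your log-loss sanity check against Example~\ref{exa:logloss} is the right way to pin down which convention is in force.
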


\begin{remark}
    Lemma~\ref{thm:nips08} bridges the design of a loss function $\phi(\cdot)$ with the reward function $I(\hat{\eta},\eta)$. 
\end{remark}

\begin{example}[Eliciting logarithm loss]
    \label{exa:logloss}
    Let $c$ be defined as follows,
    \begin{equation}
        \label{eq:cx}
        c(\heta) = \begin{cases}
            \hat{\eta} & \text{if } y=1 \\
            1-\hat{\eta} & \text{otherwise},
        \end{cases}
    \end{equation}
    which can be interpreted as the \textit{closeness} of the prediction to the true label.
    Intuitively, larger $c$ should get larger reward (or smaller penalty).
    Let $J(\eta) = \eta \ln \hat{\eta}(x) + (1-\eta)\ln(1- \hat{\eta})$ be the convex function. 
    Applying \eqref{eq:phi}, we can derive:
    \begin{equation}
        \begin{split}
            \phi(v) &= \left [v \ln v -\left(1-v\right) \ln\left(1-v\right)\right ]\\
                & \qquad + \left(1-v\right) \ln\big(\frac{v}{1-v}\big) \\
                &= -\ln (v) 
        \end{split}            
    \end{equation}
    The loss function is
    \begin{equation}
        \ell_{CE} = - \ln(c)
    \end{equation}
\end{example}



\subsection{Eliciting Collective Loss Function for PU Learning}
\label{sec:elicitpu}

In PU learning, the label of a specific sample in $\mathsf{U}$ is unknown. 
We only possess the statistical information of the samples.
Therefore, a reward function that suits this kind of collective information is desired.
Lemma~\ref{thm:nips08} indicates the symmetry of the link function $c(\eta)$, which changes in PU learning setting.
Let $c(x)$ be defined in \eqref{eq:cx}.
In PU learning, we must ensure $\ep[c(x)] = 1$ when $\ell(c)=0$. 
A straight-forward solution is to encourage making a certain amount of \textit{positive} predictions when the labels are negative.
The amount is such that the expectation of the predictions equals to $\mu_p$, i.e., the positive prior in unlabeled data, because $\ep[\mathbbm{1}_{\hat{y}=1}]=P(\hat{y}=1)=\mu_p$ holds. 
Under this condition, $c(x)$:
\begin{equation}    
    c(x) = \begin{cases}
        \hat{\eta}(x) & \text{if } y=1\\
        1-|\hat{\eta}(x)-\mu_p| & \text{otherwise}.
    \end{cases}
\end{equation}
Note that we apply an absolute function because when the prediction $\hat{\eta} \leq \mu_p$ it is also considered to be a negative event, thus deviates
from the correctness.
Hence, we derive the rectified reward function as follows. 
Without loss of generality, we let $\phi(\cdot)$ be logarithm function $\ln(\cdot)$.
\begin{equation}
    \label{eq:Ipu}
    I(\hat{\eta},\eta) := \eta \ln(\hat{\eta})+ (1-\eta)\ln( 1- |\hat{\eta} - \mu_p| ).
\end{equation}
According to \cite[section 7]{savage1971elicitation}, \eqref{eq:Ipu} must be upper bounded by a maximum reward. 
We further detail it in Theorem~\ref{thm:elicit}.
\begin{theorem}[Maximum reward in PU learning]
    \label{thm:elicit}
    Let $I(\hat{\eta},\eta)$ be defined as in \eqref{eq:Ipu}. 
    There exists convex function:
    \begin{equation}
        \label{eq:Jpu}
        \resizebox{.98\linewidth}{!}{$
        \displaystyle
        J(\eta) = 
        \begin{cases}
            \eta \ln \big( \eta(1+\mu_p) \big) + (1-\eta)\ln \big( (1-\eta)(1+\mu_p) \big) & \text{if } \eta>\mu_p ;\\
            \eta \ln ( \mu_p) & \text{otherwise,} 
        \end{cases}
        $}
    \end{equation}
    such that the reward function \eqref{eq:Ipu} supports $J(\eta)$ at and only at $\hat{\eta} = \eta(1+\mu_p)$.
    
\end{theorem}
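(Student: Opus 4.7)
The plan is to realize $J(\eta)$ as the pointwise supremum $\sup_{\hat\eta \in (0,1)} I(\hat\eta,\eta)$ and then verify that (i) the supremum evaluates to the stated piecewise expression, (ii) it is attained at $\hat\eta=\eta(1+\mu_p)$, and (iii) the resulting $J$ is convex, so that Savage's notion of the reward supporting $J$ (the inequality in \eqref{eq:j} with equality at the maximizer) is satisfied.

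Because $|\hat\eta - \mu_p|$ has a kink at $\hat\eta = \mu_p$, I would split the optimization in $\hat\eta$ into two branches. On the upper branch $\hat\eta \ge \mu_p$, the objective becomes $\eta\ln\hat\eta + (1-\eta)\ln(1-\hat\eta+\mu_p)$, which is strictly concave; setting its derivative to zero gives $\eta(1-\hat\eta+\mu_p)=(1-\eta)\hat\eta$, i.e.\ the unique stationary point $\hat\eta^\star=\eta(1+\mu_p)$. Substituting back and using the identity $1-\hat\eta^\star+\mu_p=(1-\eta)(1+\mu_p)$ reproduces exactly the first branch of $J$. On the lower branch $\hat\eta<\mu_p$, the objective is $\eta\ln\hat\eta+(1-\eta)\ln(1-\mu_p+\hat\eta)$, whose derivative $\eta/\hat\eta+(1-\eta)/(1-\mu_p+\hat\eta)$ is strictly positive, so the branch supremum is the boundary limit $\hat\eta\uparrow\mu_p$ with value $\eta\ln\mu_p$. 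Gluing the two: whenever $\eta(1+\mu_p)\ge\mu_p$ the interior optimum is feasible and dominates, giving the unique global maximizer; otherwise the supremum collapses to $\eta\ln\mu_p$ at $\hat\eta=\mu_p$.

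Next I would verify convexity of $J$. On the upper branch the formula simplifies to $\eta\ln\eta+(1-\eta)\ln(1-\eta)+\ln(1+\mu_p)$, a shift of the negative binary entropy, giving $J''(\eta)=1/(\eta(1-\eta))>0$. On the lower branch $J(\eta)=\eta\ln\mu_p$ is affine. At the transition $\eta=\mu_p/(1+\mu_p)$ the two expressions agree and both one-sided derivatives equal $\ln\mu_p$, so $J$ is $C^1$ and globally convex. Since the maximizer on the upper branch was the unique stationary point of a strictly concave function, the equality $I(\hat\eta,\eta)=J(\eta)$ indeed occurs at and only at $\hat\eta=\eta(1+\mu_p)$, which delivers the ``supports $\ldots$ at and only at'' conclusion.

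The main obstacle I expect is cleanly handling the kink at $\hat\eta=\mu_p$: the absolute value forces the case split, and the two pieces must be patched at the seam both for identifying the global maximizer and for proving convexity of $J$. A secondary subtlety is that the natural threshold separating the two regimes arising from the optimization is $\eta = \mu_p/(1+\mu_p)$ — the value at which the interior maximizer $\eta(1+\mu_p)$ crosses $\mu_p$ — rather than the $\eta=\mu_p$ written in the statement. I would treat the stated cutoff as a minor misprint and explicitly verify that the two branch formulas agree at the true threshold, so that $J$ is well-defined and convex as claimed.
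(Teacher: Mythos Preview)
Your approach is essentially the paper's: fix $\eta$, split on the sign of $\hat\eta-\mu_p$, take the stationary point $\hat\eta=\eta(1+\mu_p)$ on the upper branch and the monotone boundary limit $\hat\eta=\mu_p$ on the lower branch, then substitute back into $I$ to read off $J$. You go further than the paper by actually checking the convexity of $J$ and the $C^1$ gluing at the seam (the paper's proof does neither), and your observation that the natural case split in $\eta$ is at $\mu_p/(1+\mu_p)$ rather than the stated $\mu_p$ is correct and worth flagging as a misprint.
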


\begin{proof}
     
     Consider $\eta$ to be fixed at a constant value $k$. At this point, 
     \begin{itemize}
         \item if $\heta>\mu_p$, $I$ is a concave function of $\hat{\eta}$ that reaches maximum when $\frac{\partial I}{\partial \hat{\eta}}=0$.
         That is,      
         \begin{gather}
            \nonumber
            \frac{k}{\heta}- \frac{1-k}{1-|\heta-\mu_p|}\frac{\heta-\mu_p}{|\heta-\mu_p|} =0\\
            \label{eq:heta1}
            \heta = k(1+\mu_p)    
         \end{gather}
         \item if $\heta \leq \mu_p$, $I$ is monotonically increasing and reaches maximum when $\mu_p$ is at maximum
         \begin{equation}\label{eq:heta2}
            \heta=\mu_p    
         \end{equation}         
     \end{itemize}
     Plugging the value $\heta$ in \eqref{eq:heta1} and \eqref{eq:heta2} into \eqref{eq:Ipu} will derive ~\eqref{eq:Jpu}.
\end{proof}

\subsection{Implementation}

We apply stochastic gradient optimization.
Instead of traditional one-loss-per-sample paradigm, we collect the model predictions from multiple samples while update the gradient only once.
That is equivalent to ask multiple agents to make decisions under condition of \eqref{eq:Ipu}
The intuition is as follows:
It is difficult to ensure the correctness of a single prediction especially under unlabeled data. 
The underlying label may be either positive or negative.
However, when a batch of samples are considered together, the expectation of the prediction converges to $\mu_p$.
For all mini-batch $\mathsf{S_b}=\mathsf{P_b}\cup\mathsf{U_b}$, the loss function is as follows.
\begin{equation}
    \label{eq:loss}
    \begin{split}
        \ell(\heta,y) &= \begin{cases}
                         -\ln \heta  &\text{if $y=1$} \\
                         -\ln (1-|\frac{1}{|\mathsf{U_b}|}\sum_{x \in \mathsf{U_b}}\heta - \mu_p |) &\text{if $y=0 $.}
                     \end{cases}
    \end{split}
\end{equation}




In practice, we treat the positive class prior $\mu_p$ in $\mathsf{U}$ as known during training. 
Many related works~\cite{Plessis2015ClasspriorEF,Bekker2018EstimatingTC} can be applied to estimate it.
We further show that our method is insensitive to it in Section~\ref{sec:robust}.




%
        


\subsection{Estimation error bound}
We next theoretically upper bound the generalization error.
Let $\heta_{pu}$ be the empirical risk minimizer corresponding to~\eqref{eq:loss}.
The learning problem is to find an optimal decision function $\heta^*$ in the function class $\mathcal{F}=\{\heta \mid \norm{\heta}_{\infty}<C_{\eta} \}$ where $C_{\eta}$ is a constant. 
Formally, $\heta^* = \argmin_{\heta \in \mathcal{F}} R(\heta)$.
Let $\mathfrak{R}_n$ be the Rademacher complexity defined in~\cite{Bartlett2001RademacherAG}.
\begin{lemma}[\citeauthor{Ledoux1991ProbabilityIB} \citeyear{Ledoux1991ProbabilityIB}]
    \label{thm:talagrand} Assuming $\Phi: \mathbb{R}\mapsto\mathbb{R}$ is Lipschitz continuous with constant $L_{\Phi}$ and $\Phi(0) = 0$, we have
    \begin{equation}
        \mathfrak{R}_n(\Phi\circ \mathcal{F} ) \leq L_\Phi \mathfrak{R}_n(\mathcal{F}).
    \end{equation}    
\end{lemma}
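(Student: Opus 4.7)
The plan is to treat this as the classical Ledoux--Talagrand contraction inequality for Rademacher averages, whose standard proof proceeds by reducing to a single-coordinate comparison and then iterating. Recall that $\mathfrak{R}_n(\Phi \circ \mathcal{F}) = \ep\, \sup_{f \in \mathcal{F}} \frac{1}{n} \sum_{i=1}^n \sigma_i \Phi(f(x_i))$ with i.i.d.\ Rademacher $\sigma_i$. First I would fix all but one of these variables, prove the desired bound one coordinate at a time, and then chain the one-coordinate contractions together by induction on $n$ via the tower property of conditional expectation.

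For the single-coordinate step, let $R(f) = \sum_{j \neq k} \sigma_j \Phi(f(x_j))$ denote the ``rest'' of the sum and average only over $\sigma_k \in \{-1,+1\}$:
\begin{equation*}
\ep_{\sigma_k}\Big[\sup_{f \in \mathcal{F}} R(f) + \sigma_k \Phi(f(x_k))\Big] = \tfrac{1}{2} \sup_{f, g \in \mathcal{F}} \big[ R(f) + R(g) + \Phi(f(x_k)) - \Phi(g(x_k)) \big].
\end{equation*}
By the Lipschitz hypothesis, $\Phi(f(x_k)) - \Phi(g(x_k)) \leq L_\Phi |f(x_k) - g(x_k)|$. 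The key maneuver is that at the supremum one may relabel $f \leftrightarrow g$ if necessary so that $f(x_k) \geq g(x_k)$; the right-hand side then drops the absolute value and becomes $L_\Phi ( f(x_k) - g(x_k) )$, which recollapses to $\ep_{\sigma_k}[ \sup_f R(f) + L_\Phi \sigma_k f(x_k) ]$. Iterating for $k = 1, \dots, n$ and dividing by $n$ gives the stated inequality.

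The main obstacle, as is usual for this proof, is making the relabeling step rigorous without picking up extraneous constants: the suprema over $f$ and $g$ are taken independently, so one must justify that the sign flip is compatible with a measurable selection and that the pair structure recombines cleanly into a single Rademacher expectation. The assumption $\Phi(0) = 0$ smooths over the boundary case where the supremum degenerates and is what pins the contraction constant at exactly $L_\Phi$ rather than $2 L_\Phi$. Since the lemma is a classical result invoked as a black box on the route to the generalization bound for the minimizer of~\eqref{eq:loss}, I expect the paper to cite it directly rather than reproduce the argument; the subsequent work will be to verify that the surrogate defined by~\eqref{eq:loss} composed with $\mathcal{F} = \{\heta : \norm{\heta}_\infty < C_\eta\}$ is indeed Lipschitz with an explicit constant so that this lemma can be applied.
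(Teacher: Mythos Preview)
Your anticipation is exactly right: the paper does not prove this lemma at all but simply cites Ledoux--Talagrand and invokes it as a black box inside the proof of Theorem~\ref{thm:Rademacher}. Your sketch is the standard single-coordinate contraction argument and is essentially correct; the symmetry of $R(f)+R(g)$ under $f\leftrightarrow g$ is precisely what makes the relabeling step legitimate, so there is no real obstacle there beyond writing it out carefully.
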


\begin{theorem}[Generalization error bound]
    \label{thm:Rademacher}

    For any $ \epsilon > 0$, with probability at least $1-\epsilon$:
    \begin{equation}
        \begin{split}            
            R(\heta_{pu})-R(\heta^*) &\leq 2 \mathfrak{R}_{n}(\Phi\circ \mathcal{F}  ) +\sqrt{\frac{2\log(2/\epsilon)}{n}}\\
            & \leq \frac{C_x}{m} \mathfrak{R}_{n}(\mathcal{F}) +\sqrt{\frac{2\log(2/\epsilon)}{n}},
        \end{split}
    \end{equation}
    where $n$ is the total number of i.i.d. samples corresponding to the Rademacher variables.
\end{theorem}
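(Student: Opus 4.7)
The plan is to establish the two inequalities in sequence. For the first inequality, I would invoke the standard uniform deviation bound for Rademacher complexity: since the loss $\ell$ in \eqref{eq:loss} is bounded on the hypothesis class $\mathcal{F} = \{\heta : \norm{\heta}_{\infty} < C_{\eta}\}$, McDiarmid's inequality combined with the symmetrization trick of \cite{Bartlett2001RademacherAG} yields that for any $\epsilon>0$, with probability at least $1-\epsilon$ over the draw of $n$ i.i.d.\ samples,
\[
\sup_{\heta\in\mathcal{F}}\bigl|\widehat{R}(\heta) - R(\heta)\bigr| \leq \mathfrak{R}_n(\Phi\circ\mathcal{F}) + \sqrt{\tfrac{2\log(2/\epsilon)}{n}}.
\]
Applying this uniformly and chaining through $\heta_{pu}$ and $\heta^*$ via
\[
R(\heta_{pu}) - R(\heta^*) \leq \bigl[R(\heta_{pu}) - \widehat R(\heta_{pu})\bigr] + \bigl[\widehat R(\heta^*) - R(\heta^*)\bigr],
\]
together with the defining inequality $\widehat{R}(\heta_{pu}) \leq \widehat{R}(\heta^*)$, will yield the first displayed bound with its factor of $2$.

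For the second inequality, I would apply Lemma~\ref{thm:talagrand} to contract $\mathfrak{R}_n(\Phi\circ\mathcal{F})$ down to $L_\Phi\,\mathfrak{R}_n(\mathcal{F})$, and then compute $L_\Phi$ explicitly from \eqref{eq:loss}. On positive samples the loss reduces to $-\ln\heta$, whose derivative is bounded on the admissible range of $\heta$ via $C_{\eta}$. On negative samples, the crucial observation is that the argument of the logarithm involves the batch mean $\frac{1}{|\mathsf{U_b}|}\sum_{x\in\mathsf{U_b}}\heta$, so differentiation with respect to any one predictor value $\heta(x_i)$ carries a factor of $1/m$ where $m := |\mathsf{U_b}|$. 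Collecting the two pieces, bounding the reciprocal-log prefactors by a single constant $C_x$ absorbing $C_\eta$-type terms, and applying Lemma~\ref{thm:talagrand} will give $2L_\Phi \leq C_x/m$, which is precisely the second claimed inequality.

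The main technical obstacle is the Lipschitz constant computation: the log-barrier terms $-\ln\heta$ and $-\ln(1-|\bar\heta-\mu_p|)$ blow up near the boundary of $(0,1)$, so I would need to restrict $\heta$ to a compact subset of $(0,1)$ (and implicitly bound $|\bar\heta-\mu_p|$ away from $1$), which is exactly what the assumption $\norm{\heta}_{\infty}<C_{\eta}$ is designed to permit and where the constant $C_x$ must be chosen to subsume the resulting bounds. A secondary subtlety is that the negative-sample loss is not a pointwise function of a single predictor output but of the batch mean, so the contraction lemma must be applied to the map from the batch prediction vector to the scalar loss, with the $1/m$ averaging responsible for the $1/m$ improvement in the final inequality. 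Assuming $\Phi(0)=0$ (or absorbing $\Phi(0)$ into $C_x$) lets Lemma~\ref{thm:talagrand} be applied directly without further modification.
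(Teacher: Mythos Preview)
Your argument for the first inequality matches the paper exactly: it simply cites the routine Rademacher generalization bound (the paper points to Section~26.1 of \cite{ShalevShwartz2014UnderstandingML}), which is precisely the McDiarmid-plus-symmetrization chain you describe, followed by the standard ERM decomposition through $\widehat R(\heta_{pu})\le \widehat R(\heta^*)$.

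For the second inequality, however, your derivation diverges from the paper's in two concrete ways. First, the paper does not compute the Lipschitz constant of the loss as a function of the predictor output $\heta$; instead it imports the Lipschitz constant of cross entropy from an external reference (yedida), where the constant is $\tfrac{\norm{X}_2}{2m}\le \tfrac{C_x}{2m}$ with $C_x$ defined as a bound on the \emph{input vector norm} $\norm{X}_2$, not as an envelope over $C_\eta$-type predictor bounds. The paper then asserts, without a fresh calculation, that the same constant applies to the collective loss \eqref{eq:loss}. Second, the paper defines $m=\tfrac{|\Omega|}{|\mathsf{U_b}|}$, not $m=|\mathsf{U_b}|$ as you take it; so your mechanism for the $1/m$ factor (the batch average inside the negative-label term) is not the one the paper invokes.

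Your route is arguably more self-contained, but it faces the subtlety you already flag: Lemma~\ref{thm:talagrand} as stated is for scalar, pointwise $\Phi$, whereas the unlabeled term in \eqref{eq:loss} depends on the whole batch mean, so a vector-contraction variant would be required and the resulting constant would need care. The paper avoids that by outsourcing the Lipschitz bound to the cited cross-entropy result and claiming it transfers. If your goal is to reproduce the paper's proof, you should replace your Lipschitz computation with that citation and use the paper's definitions of $C_x$ and $m$.
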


The Lipschitz constant is $\frac{\norm{X}_2}{2m} \leq \frac{C_x}{2m} $ for original cross entropy~\cite{yedida} where $m=\frac{|\Omega|}{|\mathsf{U_b}|}$ and $\norm{X}_2 \leq C_x, C_x\in \mathbb{R}^+$ is the input vector norm.
This Lipschitz constant also applies for \eqref{eq:loss}, so that the last inequality follows.
The penultimate inequality follows from routine proof of generalization bound using Rademacher complexity~\cite[Section 26.1]{ShalevShwartz2014UnderstandingML}.

\begin{table*}[h!]
    \centering
\begin{tabular}{cccccc}
        \toprule
        Dataset  & \#Train  & \#Test   & Details  & P class  & N class \\
        \midrule
        MNIST    & 60000    & 23878    & 32$\times$32 image & 0,2,4,6,8 & 1,3,5,7,9 \\[2pt]
        USPS     & 7291     & 2942     & 32$\times$32 image & 0        & rest \\[2pt]
        SVHN     & 73257    & 20718    & 16$\times$16 image & 1,2,3,4,5 & 6,7,8,9,0 \\[2pt]
        \multirow{2}[0]{*}{CIFAR-10} & \multirow{2}[0]{*}{50000} & \multirow{2}[0]{*}{19947} & \multirow{2}[0]{*}{32$\times$32 image} & ‘bird’, ‘cat’, ‘deer’,  & ‘airplane’, ‘auto mobile’,  \\
                 &          &          &          & ‘dog’, ‘frog’, and ‘horse’ & ‘ship’, and ‘truck’ \\[2pt]
        \multirow{2}[1]{*}{20ng} & \multirow{2}[1]{*}{11314} & \multirow{2}[1]{*}{7532} & \multirow{2}[1]{*}{text} & ‘alt.’, ‘comp.’, & ‘sci.’, ‘soc.’ \\
                 &          &          &          &  ‘misc.’ and ‘rec.’ &  and ‘talk.’ \\
        \bottomrule
\end{tabular}%
    \caption{Dataset specification. The last two rows elaborate how positive and negative classes are formed.}
    \label{tab:dataspec}%
\end{table*}%

\section{Experiments}
We perform experiments on five real-world datasets, including MNIST~\cite{lecun1998gradient}, USPS~\cite{hastie2005elements}, SVHN~\cite{netzer2011reading}, CIFAR-10~\cite{Krizhevsky2009LearningML} and 20ng (twenty news groups)~\cite{Lang1995Newsweeder}.
We choose the positive and negative class in accordance with the previous research~\cite{Kiryo2017PositiveUnlabeledLW}.
The specification of datasets are described in Table~\ref{tab:dataspec}.
We still need the actual label for testing the models, hence we use originally labeled data.
Specifically, we randomly pick $r=20\%, 30\%,40\%, 80\%$ of P class data and mix them with all the N class data to compose the unlabeled set $\mathsf{U}$.
The remaining P class data forms the positive set $\mathsf{P}$.

We apply neural networks as the predictor function. 
Specifically, we apply vanilla vgg-16 structure~\cite{Simonyan2014VeryDC} to encode the input features.
For 20ng, all the details including model structure (a multilayer perceptron with five layers and the activation functions are \texttt{Softsign}) and pre-trained word embedding (300-dimension GloVe~\cite{Pennington2014GloveGV} word embeddings) are same with ~\cite{Kiryo2017PositiveUnlabeledLW}.
For the optimizer, we use Nadam~\cite{dozat2016incorporating} with learning rate 0.0005 throughout all models.
The parameters in nnPU are set equal to the original paper, i.e., $\beta=0,\gamma=1$.

We then evaluate the results to show the efficacy of proposed method cPU.
We explore the following two common questions in applications:
1) Can it separate the unlabeled positive samples from the negative ones without explicit exposure to negative samples?
2) Is it sensitive to class prior, which may vary and sometimes with uncertainty in real applications?

\begin{table*}[htb]
    \centering
    \small    
\begin{tabular}{ccccccc}
    \toprule
    Dataset  & r        & uPU      & nnPU     & LDCE     & PULD     & cPU (ours) \\
    \midrule
    \multirow{4}[1]{*}{MNIST} & 0.2      & 0.9920 $\pm$ 0.0003 & 0.9868 $\pm$ 0.0011 & --       & --       & \boldmath{}\textbf{0.9925 $\pm$ 0.0003}\unboldmath{} \\
             & 0.3      & 0.9910 $\pm$ 0.0006 & 0.9859 $\pm$ 0.0010 & --       & --       & \boldmath{}\textbf{0.9911 $\pm$ 0.0002}\unboldmath{} \\
             & 0.4      & 0.9898 $\pm$ 0.0005 & 0.9853 $\pm$ 0.0011 & --       & --       & \boldmath{}\textbf{0.9907 $\pm$ 0.0008}\unboldmath{} \\
             & 0.8      & 0.9772 $\pm$ 0.0013 & 0.9787 $\pm$ 0.0005 & --       & --       & \boldmath{}\textbf{0.9851 $\pm$ 0.0006}\unboldmath{} \\[2pt]
    \multirow{4}[0]{*}{USPS} & 0.2      & 0.9396 $\pm$ 0.0015 & \boldmath{}\textbf{0.9624 $\pm$ 0.0030}\unboldmath{} & 0.934    & --       & 0.9606 $\pm$ 0.0009 \\
             & 0.3      & 0.9398 $\pm$ 0.0024 & \boldmath{}\textbf{0.9638 $\pm$ 0.0034}\unboldmath{} & 0.911    & --       & 0.9599 $\pm$ 0.0027 \\
             & 0.4      & 0.9357 $\pm$ 0.0046 & 0.9595 $\pm$ 0.0017 & 0.901    & --       & \boldmath{}\textbf{0.9624 $\pm$ 0.0017}\unboldmath{} \\
             & 0.8      & 0.9334 $\pm$ 0.0031 & 0.9316 $\pm$ 0.0077 & --       & --       & \boldmath{}\textbf{0.9501 $\pm$ 0.0018}\unboldmath{} \\[2pt]
    \multirow{4}[0]{*}{SVHN} & 0.2      & 0.9082 $\pm$ 0.0023 & 0.8972 $\pm$ 0.0036 & 0.785    & 0.851    & \boldmath{}\textbf{0.9150 $\pm$ 0.0014}\unboldmath{} \\
             & 0.3      & 0.9044 $\pm$ 0.0017 & 0.8995 $\pm$ 0.0021 & 0.776    & 0.852    & \boldmath{}\textbf{0.9102 $\pm$ 0.0020}\unboldmath{} \\
             & 0.4      & 0.9027 $\pm$ 0.0022 & 0.8953 $\pm$ 0.0037 & 0.748    & 0.850    & \boldmath{}\textbf{0.9083 $\pm$ 0.0023}\unboldmath{} \\
             & 0.8      & \boldmath{}\textbf{0.8679 $\pm$ 0.0039}\unboldmath{} & 0.8569 $\pm$ 0.0049 & --       & --       & 0.8595 $\pm$ 0.0019 \\[2pt]
    \multirow{4}[0]{*}{CIFAR-10} & 0.2      & 0.8534 $\pm$ 0.0032 & 0.8374 $\pm$ 0.0033 & 0.772    & 0.834    & \boldmath{}\textbf{0.8610 $\pm$ 0.0029}\unboldmath{} \\
             & 0.3      & 0.8427 $\pm$ 0.0024 & 0.8264 $\pm$ 0.0056 & 0.761    & 0.861    & \boldmath{}\textbf{0.8556 $\pm$ 0.0054}\unboldmath{} \\
             & 0.4      & 0.8351 $\pm$ 0.0049 & 0.8178 $\pm$ 0.0063 & 0.701    & 0.860    & \boldmath{}\textbf{0.8446 $\pm$ 0.0038}\unboldmath{} \\
             & 0.8      & 0.7636 $\pm$ 0.0025 & 0.7494 $\pm$ 0.0023 & --       & --       & \boldmath{}\textbf{0.7906 $\pm$ 0.0021}\unboldmath{} \\[2pt]
    \multirow{4}[1]{*}{20ng} & 0.2      & 0.8601 $\pm$ 0.0013 & 0.7675 $\pm$ 0.0410 &   --     &   --    & \boldmath{}\textbf{0.8601 $\pm$ 0.0012}\unboldmath{} \\
             & 0.3      & 0.8589 $\pm$ 0.0014 & 0.8132 $\pm$ 0.0180 &   --      &  --      & \boldmath{}\textbf{0.8599 $\pm$ 0.0034}\unboldmath{} \\
             & 0.4      & 0.8573 $\pm$ 0.0050 & 0.8414 $\pm$ 0.0047 &  --     &  --     & \boldmath{}\textbf{0.8592 $\pm$ 0.0041}\unboldmath{} \\
             & 0.8      & 0.8422 $\pm$ 0.0027 & 0.8191 $\pm$ 0.0022 & --       &   --     & \boldmath{}\textbf{0.8428 $\pm$ 0.0028}\unboldmath{} \\
    \bottomrule
    \end{tabular}%
    \caption{Comparison with current state-of-the-art in accuracies. Each experiment is repeated five times. The reported values are in the format of ``\textit{mean} $\pm$ \textit{standard deviation}". The results of LDCE and PULD are excerpted from the original paper, thus without standard deviation values. }
    \label{tab:expall}%
\end{table*}%

\subsection{Comparison to State of the Art}
We first show the overall evaluation results on the real-world datasets.
We compare our proposed approach with current state-of-the-art PU
learning methods: unbiased PU (uPU) \cite{Plessis2015ConvexFF} and non-negative PU (nnPU) \cite{Kiryo2017PositiveUnlabeledLW}, LDCE \cite{Shi2018PositiveAU} and PULD \cite{Zhang2019PositiveUnlabeledLW}.
We re-implement uPU and nnPU using the same vgg-16 structure as in our method.
We do not compare with LDCE and PULD, but simply provide the results for reference because: 
1) they require additional features construction/engineering process, which is not explicit;
2) these two models deeply involve support vector machine~\cite{Cortes1995SupportVectorN} as their model, and thus can neither be plugged in by other loss functions than hinge loss nor be fairly compared with neural networks.
The experiments are repeated five times with randomly sampled P class each. 
We report the mean and standard deviation of accuracies in Table~\ref{tab:expall}.
We can see that our proposed method cPU outperforms the current state-of-the-art methods in most cases and are relatively more stable (smaller standard deviation).
On the rather more difficult dataset CIFAR-10, cPU achieves a healthy 1-4 point accuracy gap with the closest competitor.
Note that, Zhang \textit{et al}.~\shortcite{Zhang2019PositiveUnlabeledLW} reported that nnPU performs dramatically worse than other competitors (i.e., the best accuracy is 0.771 for CIFAR-10 at $r=30\%$), which did not happen in our experiments. 

\subsection{Robustness}
\label{sec:robust}
\begin{table}[htb]
    \centering
    \small
\begin{tabular}{ccrrrr}
    \toprule
    \multirow{2}[4]{*}{Dataset} & \multirow{2}[4]{*}{r} & \multicolumn{4}{c}{$\Delta \mu_p$} \\
    \cmidrule{3-6}         &          & -10\%    & -5\%     & +5\%     & +10\% \\
    \midrule
    \multirow{4}[1]{*}{MNIST} & 0.2      & 0.9925   & 0.9924   & 0.9927   & 0.9842 \\
             & 0.3      & 0.9905   & 0.9912   & 0.9911   & 0.9878 \\
             & 0.4      & 0.9908   & 0.9907   & 0.9907   & 0.9881 \\
             & 0.8      & 0.9832   & 0.9842   & 0.9855   & 0.9609 \\[3pt]
    \multirow{4}[0]{*}{USPS} & 0.2      & 0.9651   & 0.9616   & 0.9621   & 0.9542 \\
             & 0.3      & 0.9641   & 0.9581   & 0.9656   & 0.9517 \\
             & 0.4      & 0.9631   & 0.9631   & 0.9606   & 0.9527 \\
             & 0.8      & 0.9601   & 0.9562   & 0.9283   & 0.9128 \\[3pt]
    \multirow{4}[0]{*}{SVHN} & 0.2      & 0.9156   & 0.9078   & 0.9124   & 0.9022 \\
             & 0.3      & 0.9106   & 0.9159   & 0.9097   & 0.8921 \\
             & 0.4      & 0.9085   & 0.9096   & 0.8989   & 0.8888 \\
             & 0.8      & 0.8763   & 0.8755   & 0.8402   & 0.8283 \\[3pt]
    \multirow{4}[0]{*}{CIFAR-10} & 0.2      & 0.8597   & 0.8607   & 0.8609   & 0.8543 \\
             & 0.3      & 0.8542   & 0.8527   & 0.8519   & 0.8549 \\
             & 0.4      & 0.8343   & 0.8401   & 0.8435   & 0.8311 \\
             & 0.8      & 0.7778   & 0.7854   & 0.7854   & 0.7797 \\[3pt]
    \multirow{4}[1]{*}{20ng} & 0.2      & 0.8593   & 0.8594   & 0.8606   & 0.8598 \\
             & 0.3      & 0.8589   & 0.8598   & 0.8602   & 0.8596 \\
             & 0.4      & 0.8590   & 0.8605   & 0.8578   & 0.8563 \\
             & 0.8      & 0.8417   & 0.8421   & 0.8426   & 0.8352 \\
    \bottomrule
    \end{tabular}%
    
    \caption{Demonstrating the robustness of our method. $\mu_p$ is adjusted slightly lower/higher for in each column to test whether the accuracy result is sensitive to some imprecise $\mu_p$.}
    \label{tab:robust}%
\end{table}%
  
In this section, we study a common scenario of PU-learning in which the class prior is not accurately estimated.
This usually happens in real applications, where a small sample can be achieved to approximate the class prior $\pi$.
To simulate the scenario, we set $r=20\%,30\%,40\% $ and misspecify $\mu_p$.
The results are shown in Table~\ref{tab:robust}.
We can observe that generally the results are worse if deviation of $\mu_p$ become big.
Another phenomenon is that, the bigger $r$ , the deviation are more influential to the results. 
This can be avoided by sampling more data to get better estimation of $\mu_p$, since larger $r$ indicates more unlabeled data available in real applications.
Nevertheless, the fluctuation is acceptable when $\mu_p$ varies, which means our proposed approach is robust towards wrongly estimated prior probabilities of P class in unlabeled data.

\begin{figure*}[h!]
    \centering
    \includegraphics[width=0.95\linewidth]{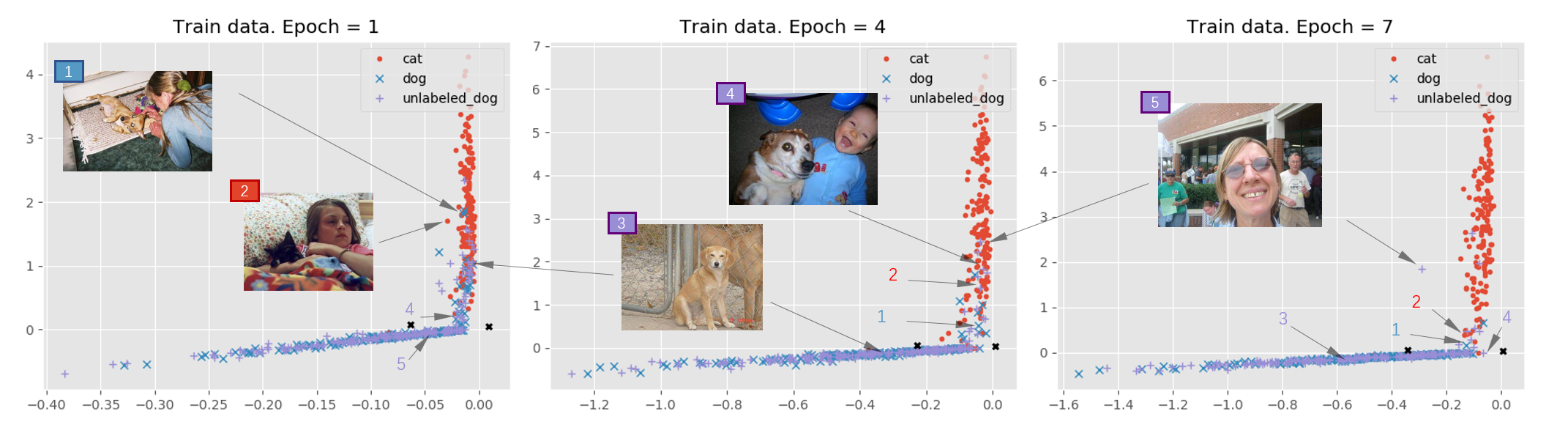}
    \caption{Training process visualization. The feature space contains data points from the 1st, 4th and 7th epoch. Some interesting examples (e.g., photograph with human) in data are shown from sample 1 to 5 (denoted S1...S5 for short). S1 is a labeled dog. S2 is a cat (unlabeled). The rest are dogs (unlabeled). Note that, cat and unlabeled dogs are unlabeled in the view of model.}
    \label{fig:trainvis}
\end{figure*}

\subsection{Training Process Analysis and Case Study}
In order to get a deeper insight on how loss function take effect, we project the layer before last onto a 2D fully connected layer and plot its activation~\footnote{We uniformly sample 500 examples from the training set for clarity of plot.}.
For simplicity, we demonstrate with a toy dataset DVC (dog-vs-cat)~\footnote{\url{https://www.kaggle.com/c/dogs-vs-cats/data}}, in which $r=40\%$ dogs are mixed with cats to form the unlabeled. 
A snapshot of the 1st, 4th and 7th epoch is shown in Figure~\ref{fig:trainvis} along with five samples (denoted S1...S5). 
We observe that some unlabeled dogs are blended with the cats at first.
As the training progressed, they gradually move towards the positive dogs.
S3 is a typical example.
This further supports the assertion that unlabeled positive samples are separable even without explicit negative examples.
We also analyze the errors. 
S1 and S2 are special examples with human inside.
We observe S1 is guided by positive label and move towards the positive center, while S2, in which the cat is barely recognizable, move towards S1, due to their resemblance, and lead to a wrong prediction.
S4 and S5 are noisy unlabeled samples.
As a result, they move back and forth across the borderline. 
This might be a useful signal for active learning, which will be left for future works.

\section{Conclusion}
In this paper, we identify the bias caused by class uncertainty in the unlabeled as the major difficulty for current risk estimators.
We propose a novel approach towards PU learning dubbed ``cPU" that collectively process the predictions.
We design the loss function through theoretical elicitation PU learning setting and rectification of the predictor.
It outperforms the state-of-the-art methods on PU learning and shows robustness against wrongly estimated class prior on the unlabeled data.

\bibliographystyle{named}
\bibliography{longrefer}



\end{document}